\documentclass[article]{llncs}
\usepackage[T1]{fontenc}
\usepackage{graphicx}
\usepackage{amsmath,amssymb,amsthm}
\usepackage{xcolor}
\usepackage{algorithm}
\usepackage{algpseudocode}

\newcommand{\T}{\mathcal{T}}  
\newcommand{\A}{\mathcal{A}}  
\newcommand{\I}{\mathcal{I}}  

\theoremstyle{definition}
\newtheorem{exmp}{Example}
\begin{document}
\title{Incremental, inconsistency-resilient reasoning over Description Logic Abox streams}
%
%
 \author{ Cas Proost\orcidID{0009-0007-8419-7363} 
 \and Pieter Bonte\orcidID{0000-0002-8931-8343}
}
\authorrunning{C. Proost et al.}
%
\institute{Departement of Computer Science, KU Leuven Kulak, Belgium
\email{pieter.bonte@kuleuven.be}}
\maketitle              
\begin{abstract}
More and more, data is being produced in a streaming fashion. This has led to increased interest into how actionable insights can be extracted in real time from data streams through Stream Reasoning. Reasoning over data streams raises multiple challenges, notably the high velocity of data, the real time requirement of the reasoning, and the noisy and volatile nature of streams. This paper proposes novel semantics for incremental reasoning over streams of Description Logic ABoxes, in order to tackle these challenges. To address the first two challenges, our semantics for reasoning over sliding windows on streams allow for incrementally computing the materialization of the window based on the materialization of the previous window. Furthermore, to deal with the volatile nature of streams, we present novel semantics for inconsistency repair on such windows, based on preferred repair semantics. We then detail our proposed semi-naive algorithms for incremental materialization maintenance in the case of OWL2 RL, both in the presence of inconsistencies and without.

\keywords{Description logics  \and Stream reasoning \and Inconsistency repair}
\end{abstract}
\section{Introduction}



The present-day digital landscape is defined by the incessant and ever-expanding generation of data streams. Social media, stock market, online shopping platforms, etc, all are generating information on a scale that was unheard of 20 years ago \cite{della2009s}. Because of the shear volume and velocity of these data streams, any party attempting to extract actionable insights from a stream, finds itself confronted with a monumental task. It is exactly this problem that led to the inception of the field of stream reasoning. It combines the field of stream processing with the field of knowledge representation and reasoning. Stream processing has long been focused on extracting information from streams, but on a relatively superficial level.  While superficial insights that can be explicitly read from the data can be interesting, most data streams contain treasure troves of implicit, deducible information. Deducing this implicit information is done through the process of reasoning, which relies on a given set of logical rules, representing domain knowledge. The more detailed or complex the insights that need to be extracted, the more expressive the logical rules, and the harder the reasoning problem. This tradeoff between expressiveness and efficiency is particularly important when reasoning over streams, because of the real-time requirement caused by the dynamic nature of the data: the insights need to be deduced and acted upon before the underlying data stream has changed (too much). Managing the expressiveness - efficiency tradeoff while also considering the growing demand for more complex insights is a major challenge in the field of stream reasoning \cite{bonte2024grounding}. Stream reasoning research therefore often focuses on optimizing reasoning algorithms, to allow for more expressive reasoning while still fulfilling the real-time requirement.\\
One of the most widespread tools in stream reasoning research, is the so-called window operator: Streams are unbounded, as they can theoretically keep going forever. It is therefore impossible to wait until the whole stream has been produced, and then process it in its entirety. The idea of a window operator is to chop a data stream into finite, processable chunks. Instead of waiting eternally for the whole stream to arrive, windowing allows reasoners to process the stream chunk by chunk.\\
While there are many different window operators, the most basic is the time based window. A time-based window operator in its most fundamental form states a starting time and an end time, and returns all facts in the stream between those times. In real-time applications, the end time of a window is commonly (close to) the current time, and the starting time is some set duration, e.g. five minutes, into the past. Such a window allows the reasoner to consider only the most recent five minutes of data for the purpose of deducing logical insights. Depending on the application, a window can move in various increments, for example the above five minute window could have its start and end time moved in increments of one or two minutes, in which case the next window overlaps the current window, referred to as a sliding window. A five minute window that only moves every five minutes is a special case where no overlap is present between consecutive windows, and is called a tumbling window.\\
Given a window, a reasoner may then perform a task called materialization, which consists of deducing all facts entailed by the data in the window~\cite{motik2019maintenance}. Such a materialization can then be queried to find all explicit and implicit answers to a query. Unfortunately, because a stream is continuously changing, when the window slides, the old window materialization is no longer accurate. New facts have entered the window and old facts have left, thus a new materialization needs to be computed. Most stream reasoners will do this from scratch, but some make use of incremental window maintenance techniques \cite{barbieri2010incremental,dell2014incremental,laser}. These techniques make use of the old materialization to compute the new: when a five minute window slides by one minute, there is a four minute overlap between two subsequent windows. Instead of computing the new window materialization from scratch, the idea is to work incrementally: remove the data that has left the window, and any facts deduced from that data, and then add the new data that has entered, and any facts entailed by that new data. This idea allows for more efficient reasoning, especially when subsequent windows have large overlaps.\\
Another major challenge of stream reasoning is the volatile and noisy nature of data streams. Real-life streams are often generated by vast amounts of sensors, which occasionally pass erroneous readings into the stream, leading to contradictions often referred to as inconsistencies. Standard symbolic reasoning techniques are not capable of dealing with inconsistencies, which means any erroneous information passed into a reasoning engine can cause the whole system to shutdown.  Moreover, inconsistencies can also arise from the way reasoning engines process data streams:\\
For simplicity purposes, many SR approaches such as C-SPARQL\cite{barbieri2009csparql}, C-QELS\cite{lephuoc2011native}, $\text{SPARQL}_{\text{stream}}$\cite{corcho2012enabling}, IMaRS\cite{dell2014incremental}, use windows to select the desired data, and then reason over the data within the window as if it is a static knowledge base. This idea is known as reasoning under graph-entailment \cite{dell2017stream}, and it is a powerful assumption that allows for more scalability and efficiency. It does, however, have its drawbacks, as illustrated by following example: suppose we consider a data-stream consisting of sensor readings of a nuclear reactor. Different sensors in the reactor could produce their data in varying intervals, suppose for example that a temperature sensor indicates whether the reactor temperature is too low, normal, or too high, every 30 seconds. Now consider a window that selects the most recent minute out of that stream. That window would contain two temperature readings. If these readings are identical, no problem will arise when the data within the window is viewed as a static knowledge base. However, if the temperature readings differ, a static view of the window would 2 contradicting facts, e.g. the reactor temperature is too low and the reactor temperature is too high. A reasoner without the capability for inconsistency repair will state that this window is inconsistent, and any automated systems that rely on the reasoner's results to regulate the reactor temperature will not be able to act, with potentially catastrophic consequences. \\
This example shows that inconsistencies in data-streams can originate from many different sources, not exclusively from erroneous sensor readings. Handling inconsistencies is an important and as of yet unsolved, challenge in the stream reasoning field as is brought up in \cite{bonte2024grounding} and \cite{dell2017stream}.\\
\\
This paper aims to address the above mentioned two of the major challenges of the stream reasoning field. For this purpose, we concern ourselves with a family of expressive logics called Description Logics (DLs). DLs have been extensively studied, but largely in the context of static data, where there is no real-time requirement. Because of their expressive nature, they are well suited for applications that require extracting complex insights from data, which is a fundamental objective of stream reasoning. For example, they form the basis of the Web Ontology Language (OWL2)\cite{Hitzler2012OWL2W} that is widely used in the semantic web and stream reasoning communities.\\
Naturally, due to the expressiveness-efficiency tradeoff, reasoning under (especially the most expressive) DLs is difficult \cite{hustadt2005complexity,calvanese2013complexity}. We therefore seek to revise DL semantics to support incremental reasoning over sliding windows, allowing us to retain the expressiveness of DLs while simultaneously satisfying real-time performance requirements, addressing the first challenge.\\
Moreover, repairing inconsistencies has been extensively studied in DL literature \cite{lembo2010inconsistency,rosati2011complexity}. By leveraging the ideas of preferred repair semantics \cite{bienvenu2014querying} and adapting them to our novel semantics, we address the second challenge: the volatile and noisy nature of streams.
\\
This paper is structured as follows: We start by outlining the issues with related research in both the incremental reasoning area and in inconsistency repair. We then introduce preliminary basics on DL and streams. Then we detail our incremental semantics for reasoning over windows, followed by the inconsistency repair semantics. In the final section we sketch semi-naive algorithms for reasoning under these semantics in the description logic RL, which corresponds to the OWL2 RL profile.

\section{Related work}
Incremental reasoning or materialization maintenance does not have much presence in the description logics literature, as it is more extensively studied in the database and knowledge representation communities \cite{salem2000rollin,griffin1995incremental,ceri1991deriving},where reasoning takes a different form. However, it has seen some attention in reasoning research under the datalog formalism, which is similar to description logics (even equivalent to some DLs). In \cite{motik2019maintenance}, the authors give a comprehensive overview and comparison of different algorithms for maintaining the materialization of a datalog knowledge base. They consider three distinct approaches: the counting algorithm, the delete/rederive algorithm (DRed) and the Forward Backward Forward algorithm (FBF). \\
The counting algorithm keeps track of how many times a fact in the knowledge base has been derived/proven, and when facts are deleted, reduces this counter for all facts that derive from the deleted facts. When a fact's counter hits zero, it also disappears from the knowledge base. This algorithm seems to perform among the best of the three, but it has the added overhead of keeping these counters in memories, which unfortunately is not benchmarked in the paper. \\
The DRed algorithm starts instead by deleting all facts that derive from the to be deleted facts, a step known as overdeletion, and then rederiving any facts that were overdeleted. The FBF algorithm is a parametrized algorithm that acts similar to DRed by starting with a deletion step, but it seeks to limit overdeletion by performing a backward chain on facts that get added to the deletion pool. This backward chain looks for a proof that will hold after deletion, and if such a proof is found, the fact is removed from the deletion pool. A parameter decides the depth of the backward chain, meaning that DRed is the extremal case of DRed with backward chaining depth equal to zero. \\
Note that these algorithms need to perform reasoning in order to delete facts, as they need to figure out which facts in the materialization depend on the to be deleted facts, so that they can also be removed/have their count reduced. This is not ideal in a streaming scenario where latency is crucial. This is why IMaRS\cite{dell2014incremental} and LASER\cite{laser} developed algorithms for incremental materialization maintenance on streams that do not need to perform reasoning for the deleting of facts.\\
For this, they make use of the temporal nature of the stream. Because materialization maintenance is often performed on windows in a streaming scenario, the timestamps of facts indicate which ABox assertions will leave the window when it slides. By taking into account the timestamps of the ABox assertions used to derive a fact, it is therefore possible to determine a "horizon time" when the fact will leave the ABox. This time is determined when the fact is derived and updated whenever a different proof for the fact is found. Therefore, when the window slides and facts need to be deleted from the materialization, there is no need to reason which derived facts need to be deleted, it is sufficient to look for the facts with expired horizon times and then delete those. \\
These two papers are the only one to implement such algorithms for streams however, and they do not cover a wide variety of formalisms: IMaRS works on RDFS+ and LASER for LARS. DL literature has seen no such algorithm nor even semantics specifically adapted to streams. This despite the fact that the DL-based OWL2 ontology language is widely used within the semantic web and the IoT, which are two of the main application domains for stream reasoning.  Furthermore, neither IMaRS, LASER or the incremental view maintenance algorithms (Counting, Dred, FBF) are able to perform reasoning when inconsistencies arise, even though inconsistencies are common occurrences in volatile data streams.\\
\\
In contrast, the DL field has seen widespread interest in the problem of inconsistency repair. Different semantics seek to enable different reasoning tasks over inconsistent knowledge bases by “repairing” the inconsistencies. Most work considers only inconsistent ABoxes, as an inconsistent TBox is generally seen as a modeling mistake \cite{bienvenu2014querying,lembo2010inconsistency}. In the case of stream reasoning, an inconsistent TBox can and should be fixed before deployment of the system, while inconsistencies in the ABox (stream) have to be repaired on the go.\\
Current DL inconsistency repair semantics stem from the idea of ABox Repair (AR) \cite{lembo2010inconsistency} semantics, which considers maximal consistent subsets of the ABox, known as repairs. Facts are then considered to be entailed by an inconsistent ABox if it is entailed by all repairs of that ABox. Some variations on this idea include the Intersection AR (IAR) semantics, and brave semantics, which consider the same maximal consistent subsets but define a fact to be entailed if it holds in the intersection of all repairs, respectively if it holds in any repair. The problem with these semantics is that they require the consideration of all these repairs, and hence the computation of all these repairs in order to perform materialization \cite{Benferhat2015preferred}. Furthermore, the computation of these repairs is currently done by "guessing" subsets of the ABox and checking if they are repairs. Since an ABox with $n$ assertions has $2^n$ possible subsets, this process can be extremely inefficient in some cases. \\
An interesting modification of AR semantics is the idea of preferred repair semantics\cite{bienvenu2014querying}, which is a generalization of AR or IAR semantics where repairs are also maximal consistent subsets of the ABox, but with maximality defined with respect to some general preorder on subsets rather than the $\subseteq$ preorder. The most well-known preferred repair are the priority semantics, which assumes different priority levels within the ABox, and the weight repair semantics, which assigns weights to all assertions in the ABox. In \cite{bienvenu2014querying}, the authors investigate the complexity of query answering under these preferred repair semantics. They find that these semantics lead to larger sets of answers, with decreased performance. We note however that their implementation seems to behave in a way similar to AR or IAR algorithms, but with extra steps, which explains the performance decrease.\\
While AR, IAR and brave semantics have been studied for streaming DL in \cite{bourgaux2019ontology}, they only did so for $\text{DL-Lite}\mathcal{R}$ and $\mathcal{EL}\bot$. Moreover, most work on inconsistency repair focuses on query answering rather than materialization. Preferred repair semantics has, to the best of our knowledge, not been studied over streams, nor for materialization, nor in the context of OWL2 RL. 

\section{Preliminaries}
\subsection{Description logics and OWL2 RL}

DLs are a family of knowledge representation languages commonly used in reasoning applications for their properties of decidability and expressiveness\cite{Baader_Horrocks_Lutz_Sattler_2017}. In a standard DL system, there are two distinct structures of information: an ABox and a TBox. The ABox consists of factual information, ABox assertions, which usually are one of two things: concept assertions or role assertions. Concept assertions take the form $C(x)$, meaning 'the individual $x$ belongs to the concept $C$. DL concepts correspond to unary predicates in other formal knowledge representation languages such as e.g. datalog. In contrast, role assertions take the form $R(x,y)$, meaning "individual $x$ is $R$-related to individual $y$, role therefore being the DL equivalent of binary predicates. \\
While the ABox contains factual, explicit information, the TBox contains rules used to reason over these facts. A DL rule consists of a head and a body, which, depending on the syntax of the particular description logic, are sentences constructed from concepts, roles, and logical operators such as $\sqcap, \sqcup, \exists, \forall, \dots$ TBox rules allow for reasoning over data found in an ABox: if the body/subsumee of a rule holds for a certain individual (or pair of individuals), then the head/subsumer must hold too. \\
DL semantics are based on the idea of interpretations, which represent a knowledge base by mapping it to some concrete set: an interpretation $\I$ consists of a domain $\Delta_\I$ and a mapping $\cdot^\I$. This mapping maps every individual in the knowledge base to an element of the domain, every concept to a subset of the domain, and every role to a subset of $\Delta_\I\times\Delta_\I$. In this paper, we will uphold the unique name assumption (UNA), which means that the domain of an interpretation is exactly the set of individuals in the knowledge base, and an interpretation maps each individual to itself. Note that in general however, this need not be true when studying DLs in general.\\
An interpretation $\I$ is said to satisfy an ABox concept assertion $C(x)$ if $x^\I \in C^\I$, and a role assertion $R(x,y)$ if $(x^\I,y^\I) \in R^\I$. Naturally, if an interpretation satisfies all assertions in an ABox $\A$ it is said to satisfy $\A$. Similarly, an interpretation is said to satisfy a rule $B \sqsubset H$ if $B^\I \subset H^\I$, and it satisfies a TBox $\mathcal{T}$ if it satisfies every rule in $\mathcal{T}$. Note that $B^\I$ and $H^\I$ are somewhat ambiguous here, but will be made concrete when talking about the RL description logic this paper concerns itself with. \\
An interpretation $\I$ for a knowledge base $(\A, \mathcal{T})$ is a model for that knowledge base if it satisfies both $\A$ and $\mathcal{T}$. In DL, an assertion or query is entailed by a knowledge base if and only if it holds true in every model of that knowledge base. This means the problem of reasoning in DL comes down to computing which facts hold in every model. This is not in general easy, but for some DLs, the existence of a minimal model, i.e. a model contained in every other model, makes this problem a lot more approachable.\\
One such DL is the underlying DL of the OWL2 RL profile, of which we consider a simplified version as defined in \cite{Kontchakov2014}. In order to construct a concept or role, it uses an alphabet of concept names $C_1, ..., C_n$, role names $P_1, ..., P_n$ and the logical operators $\sqcap, \exists$ and the inverse role operator $^-$. Using this syntax, a rule is of one of the following forms:
\[ B \sqsubseteq A, B \sqsubseteq \bot, R_1 \sqsubseteq R_2\]
where $A$ is a concept name, $R_1,R_2$ are role names or their inverse, $\bot$ is the bottom concept, i.e. the concept mapped to the empty set by every model, and $B$ is a concept constructed with the operators mentioned above. The most noteworthy property of this syntax is that it does not allow for existential quantifiers in the head of a rule, which results in the useful property of the existence of a minimal or canonical model for any RL knowledge base. \\
This model is constructed starting from an interpretation that represents only the fact present in the ABox, and then iteratively adding facts to satisfy rules in the TBox. 
\begin{definition}
    For a RL knowledge base $(\A,\mathcal{T})$, the standard interpretation $\I_\A$ is defined by:
    \begin{enumerate}
                \item $A^{\I_\A} = \left\{a^{\I_\A} \vert A(a)\in \A\right\}$
                \item $P^{\I_\A} = \left\{(a^{\I_\A},b^{\I_\A}) \vert P(a,b)\in \A\right\}$
    \end{enumerate}
\end{definition} 

The standard interpretation is then used to construct a series of interpretations through a kind of forward chaining process: set $\I_0 = \I_{\A}$. To obtain $\I_{j+1}$ from $\I_j$, apply the following rules:
\begin{enumerate}
    \item for any $B\sqsubseteq A\in \mathcal{T}$, if $a\in B^{\I_j}$ and $a \notin A^{\I_j}$, add $a$ to $A^{\I_{j+1}}$
    \item for any $P_1 \sqsubseteq P_2 \in \mathcal{T}$, if $(a, b) \in P_1^{\I_j}$ and $(a,b)\notin P_2^{\I_j}$, add $(a,b) $ to $P_2^{\I_j}$
    \item for any $B\sqsubseteq \bot\in \mathcal{T}$, if $a\in B^{\I_j}$ then the process terminates as $(\A,\mathcal{T})$ is inconsistent
\end{enumerate}
Note that $B^\I$ here is the obvious choice depending on what shape $B$ takes: if $B := A$ is a concept name, $B^\I = A^\I$, if $B:= B_1\sqcap B_2$ then $B^\I = B_1^\I\cap B_2^\I$ and if $B := \exists R.B_1$, then $B^\I = \left\{a \space \vert\space  \exists b \in B_1^\I s.t. (a,b) \in R^\I\right\}$\\

As is shown in \cite{Kontchakov2014}, this procedure terminates and either results in showing the knowledge base is inconsistent or in a model for the knowledge base called the canonical model, which has the desired property of being minimal.\\
We now recall some notions of semantics for inconsistency repair. In particular, we discuss preferred repair. As with most ABox repair semantics, preferred repair semantics defines a repair to be a consistent subset of the ABox that is maximal. The relation/order that determines this maximality is what sets preferred repair semantics apart.
\begin{definition}[\cite{bienvenu2014querying}]\label{def:pref_order}
    Given an ABox $\A$ and a partitioning $\left\{\mathcal{P}_1, ..., \mathcal{P}_n\right\}$ of $\A$, called preference levels, we define a partial order $\subseteq_\mathcal{P}$ on the subsets of $\A$,$2^\A$, as follows:\\
    \[\forall \A', \A'' \subseteq \A: \A' \subseteq_\mathcal{P} \A'' \text{ iff:}\]
    \begin{enumerate}
        \item $\A'\cap\mathcal{P}_i = \A''\cap\mathcal{P}_i$ for all $1\le i \le n$ or
        \item $\exists i, 1 \le i \le n$ such that $\A'\cap\mathcal{P}_i \subsetneq \A''\cap\mathcal{P}_i$ and $\A'\cap\mathcal{P}_j = \A''\cap\mathcal{P}_j$ for all $i\le j \le n$
    \end{enumerate}

\end{definition}

A preferred repair of an ABox $\A$ with preference levels $\mathcal{P}_1, ..., \mathcal{P}_n$ is then a consistent subset $\A'\subset \A$ such that there is no other consistent subset $\A"\subset \A$ with $\A'\subseteq_\mathcal{P} \A"$. Moreover, a fact is said to be entailed under preferred AR semantics if it is entailed by every preferred repair of $\A$, and under preferred IAR if it is entailed by the intersection of all preferred repairs of $\A$.\\
Let us demonstrate this with an example:

 \begin{exmp}
     Let $\A = \mathcal{P}_1\cup \mathcal{P}_2$ with $\mathcal{P}_1 = \{A(a), B(a)\}$ and $\mathcal{P}_2 = \{C(a)\}$. let $\mathcal{T} = \{A\sqcap C \sqsubset \bot, B\sqcap C \sqsubset \bot\}$ stating that the concept $C$ is disjoint with both $A$ and $B$. The consistent subsets of $\A$ are then exactly equal to $\mathcal{P}_2$, $\mathcal{P}_1$ and the two singleton subsets of $\mathcal{P}_1$. Clearly, $\mathcal{P}_1\subseteq_\mathcal{P}\mathcal{P}_2$, as $\mathcal{P}_1\cap\mathcal{P}_2 \subsetneq \mathcal{P}_2\cap \mathcal{P}_2$. Therefore $\mathcal{P}_2$ is the only preferred repair of ($\mathcal{T},\A$), which means the only fact entailed under both preferred AR and preferred IAR by this knowledge base is $C(a)$.
 \end{exmp}
 As this example illustrates, a preferred repair of an ABox $\A = \mathcal{P}_1, ..., \mathcal{P}_n$ resolves inconsistencies by removing as little high priority assertions as possible: if an inconsistency requires removing either multiple low priority assertions or a single high priority assertion, preferred repair semantics removes the low priority assertions in favor of the high priority assertions.\\

\subsection{Streams in DL}
In general stream reasoning, a stream is a collection of time annotated facts. These facts can take different shape depending on the formalism used, eg. an RDF stream will consist of quadruplets $(s,p,o,t)$ which modify the traditional (subject predicate, object) triplets to include a timestamp. In DL however, facts are represented by ABox assertions, and therefore a stream in DL will consist of time annotated ABox assertions, which we specify below. Streams are usually considered to be infinite, which necessitates a way to bound sections of the stream. A time based window operator on a stream is defined by some interval $[t_1, t_0]$ and selects from the stream all facts with a timestamp that lies inside this interval. Reasoning over streams is often preformed under a graph entailment regime, which says a window of the stream entails a query or fact if and only if that query is entailed by the knowledge base obtained from the window. In general, graph entailment is one of three entailment regimes commonly considered in stream reasoning research. The other regimes are window-level entailment and stream-level entailment. In window level entailment, a window is still used to select a relevant part of the stream, and assertions outside the scope of the window are therefore not considered for entailment purposes. Window-level entailment differs from graph-level entailment in the fact that time stamps are not forgotten within the window however, meaning that assertions within the window are deemed to be true only for the timestamps at which they occur, as opposed to for the whole window. This entailment regime is often used in a context of more expressive, temporal reasoning over streams while still aiming for more practical reasoning, by utilizing a window to limit the scope of the reasoning. This makes window-level entailment a middle ground between the efficient, less expressive graph-level entailment and the most expressive, but impractical stream level entailment: in stream level entailment, all information present in the stream is considered for the reasoning process, including assertions from any part of the stream and their time stamps. While this makes for the most expressive reasoning, allowing for temporal relations of unlimited distance to be drawn, it is often an impractical choice for real-world use cases because of memory and computational demands \cite{dell2017stream}. \\
In this work, we will model a stream as a set $\left\{\A_t\right\}_{t\in \mathbb{T}}$ of DL ABoxes, where the set $\mathbb{T}$ is called the timeline of the stream. It is important to stress that the stream does not contain any TBox data, and when we perform reasoning over such a stream it will be in relation to a fixed TBox. The ABoxes in a stream all share from the same sets of individual names $a_1, \dots$, concept names $C_1, \dots$ and role names $P_1, \cdots$. They share these names in the sense that when a name occurs in one ABox, and later occurs in another ABox, then these names represent the same individual/concept/role, but not every ABox needs to mention every name, nor does every name need to be mentioned in multiple ABoxes. Note that our model is equivalent to a set of time annotated ABox assertions, as each ABox $\A_t$ is just the container for all assertions in the stream with timestamp $t$.\\
For the purposes of this paper, we make the following assumptions on every DL stream:
\begin{enumerate}
    \item The timeline $\mathbb{T}$ is a countably infinite subset of $\mathbb{R}$ for which there is a $\delta_0$ such that $\forall t_1,t_2 \in \mathbb{T}: \vert t_1-t_2\vert \ge \delta$.
    \item Every ABox $\A_t$ in the stream contains a finite amount of assertions
\end{enumerate}
A window defined by an interval $I = [t_1, t_2]$ is defined as the set $W_I = \left\{\A_t\right\}_{t\in I\cap \mathbb{T}}$. Associated to a window $W_I$ is its window ABox $\A_{W_I} = \bigcup\limits_{\A_t\in W_I} \A_t$. Under our assumptions, it is clear that every window and every window ABox is finite. Moreover, it is clear that under a graph-entailment regime, a query/fact is entailed by the window if and only if it is entailed by the window ABox under standard DL semantics.\\

\section{Incremental semantics}
In this section we, define an incremental version of graph-entailment semantics for window-based reasoning over description logic streams. While standard DL semantics could technically be used to reason under graph-entailment, they disregard the connection between two overlapping windows and therefore do not suffice for incremental reasoning purposes.\\
Note that these semantics are aimed to be applicable to general DL. When we discuss reasoning algorithms however, we will limit ourselves to the OWL2 RL description logic.\\

Given a stream $\left\{ \A_t\right\}_{t\in\mathbb{T}}$ as defined above and a DL TBox $\mathcal{T}$, we say an interval $I= [t_1,t_2]\cap\mathbb{T}$ defines a window $W_I = \left\{ \A_t\right\}_{t\in I}$. A window interpretation stream for $W_I$ is a set of interpretations $\left\{ \I_t\right\}_{t\in I}$ where each $\I_t$ is an interpretation (not a model) for $\A_t$. A window interpretation stream induces an interpretation on the window ABox $\A_{W_I} = \bigcup\limits_{\A_t\in W_I}\A_t$, which we can show after defining a bit of notation:
\begin{definition}
    Given two interpretations $\I_1, \I_2$ with domains $\Delta_1, \Delta_2$ for two ABoxes $\A_1,\A_2$, we define the direct sum of these interpretations as the interpretation $\I_1\oplus \I_2$ for $\A_1\cup \A_2$ with domain $\Delta_1\cup\Delta_2$ with the interpretation map given by:
    \[C^{\I_1\oplus\I_2} := C^{\I_1}\cup C^{\I_2}\]
    \[R^{\I_1\oplus\I_2} := R^{\I_1}\cup R^{\I_2}\]
    
\end{definition}
This is well-defined since we assume the unique name assumption, such that the name of the same individual is identical over different interpretations. Moreover, this definition can be used to inductively define $\bigoplus\limits_{1\le i\le n}\I_i$ for some series of interpretations $\I_1, ..., \I_n$. This allows us to define the interpretation on a window ABox induced by a window interpretation stream as follows:\\
\begin{definition}[\textbf{Window interpretation}]
    For a window $W_I$ and a window interpretation stream $\left\{ \I_t\right\}_{t\in I}$, the interpretation $\I_{W_I} = \bigoplus\limits_{t\in I}\I_t$ is the window interpretation induced by this window interpretation stream.
\end{definition}
This window interpretation is therefore an interpretation fro the window Abox $\A_W$, which contains exactly the information contained in the window interpretation stream. Notably, if the window interpretation stream were to consist of models for each ABox in the window, the associated window interpretation need not be a model for $\A_W$ which can be illustrated with the following trivial example:
\begin{exmp}
    Consider a TBox $\T =\{A\sqcap B \sqsubseteq C\}$ and a window containing two ABoxes: $\A_1 = \{A(a)\}, \A_2 =\{B(a)\}$. Suppose we consider a window interpretation stream which consists of the interpretations $\I_1$ with domain $\{a\}$ and map given by $A^{\I_1} = \{a\}, B^{\I_1} = C^{\I_1} =\emptyset$ and $\I_2$ with domain $\{a\}$ and map given by $B^{\I_2} = \{a\}, A^{\I_2} = C^{\I_2} =\emptyset$. Naturally, these interpretations are models for their respective ABoxes. However, it is clear that the induced window interpretation $\I_{W} = \I_1\oplus\I_2$ is not a model for $\A_W = \A_1\cup \A_2$, as it maps $C$ to $\emptyset$.
\end{exmp}

We can now define exactly what it means to reason over windows under graph entailment in our semantics by introducing the concept of window models.\\
 \begin{definition}[Window model]
     A window interpretation stream $\left\{ \I_t\right\}_{t\in I}$ is a window model stream for $W_I$ if each $\I_t$ is a model for $\A_t$ and the associated window interpretation $\I_{W_I}$ is a model for $\A_{W_I}$.
 \end{definition}

A fact is then said to be graph-entailed in a window $W$ if it holds in all window models for $W$, i.e. all window interpretations associated to a window model stream. The following proposition says that this is equivalent to the fact being entailed by $\A_{W}$.
\begin{proposition}\label{prop:surjection}
    The map that maps a window model stream for $W$ to its associated window interpretation $\I_{W}$ is a surjection from the window model streams of $W$ to the models of $\A_{W}$.
\end{proposition}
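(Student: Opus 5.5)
Two things need to be shown: that the map is well-defined into the models of $\A_{W}$, and that every model of $\A_{W}$ is hit. The first is immediate from the definition of a window model stream. If $\left\{\I_t\right\}_{t\in I}$ is a window model stream, then by definition its induced window interpretation $\I_{W}=\bigoplus_{t\in I}\I_t$ is a model for $\A_{W}$. So the map indeed sends window model streams to models of $\A_{W}$.

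For surjectivity, fix a model $\I$ of $\A_{W}$ and build a preimage by setting $\I_t := \I$ for every $t\in I$. More precisely, $\I_t$ is the interpretation with the same domain and the same interpretation map as $\I$, viewed as an interpretation for $\A_t$. Under the unique name assumption the domain of $\I$ is the set of individuals of the window ABox, and the direct sum is defined even when domains differ, so reusing this domain is harmless.

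Next, check that each $\I_t$ is a model for $\A_t$. Since $\A_t\subseteq \A_{W}$, every assertion of $\A_t$ is satisfied by $\I$. Since $\I$ satisfies $\T$ and satisfaction of the TBox does not depend on the ABox, $\I_t$ also satisfies $\T$. Hence $\I_t$ is a model of $(\A_t,\T)$.

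Finally, compute the induced window interpretation. Because $I\cap\mathbb{T}$ is finite and nonempty, the iterated direct sum is well defined, and it is idempotent:
\[C^{\bigoplus_t \I_t}=\bigcup_{t} C^{\I}=C^{\I},\qquad R^{\bigoplus_t \I_t}=\bigcup_{t} R^{\I}=R^{\I},\]
with domain $\bigcup_t\Delta_\I=\Delta_\I$. So $\I_{W}=\I$, which is a model of $\A_{W}$. The stream $\left\{\I_t\right\}_{t\in I}$ is therefore a window model stream and maps to $\I$, which proves surjectivity.

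The only delicate point is the identification of domains. Models of a single $\A_t$ may mention individuals that do not occur in $\A_t$, so we must allow $\I_t$ to have the full domain of $\I$ rather than only the individuals of $\A_t$. If the paper's UNA convention were read as requiring the domain of $\I_t$ to be exactly the individuals of $\A_t$, I would instead restrict $\I$ to those individuals. That version fails in general, because a restriction can break the TBox (for example, existential witnesses may lie outside $\A_t$), so I would adopt the reading where each $\I_t$ carries the full domain of $\I$. The empty-window case is trivial and can be handled separately or excluded.
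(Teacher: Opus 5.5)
Your proof is correct and follows the paper's argument: well-definedness comes straight from the definition of a window model stream, and surjectivity comes from the constant stream $\I_t := \I$. You are slightly more careful than the paper. You explicitly check, via idempotence of $\oplus$, that the induced window interpretation is exactly $\I$ (not merely some model of $\A_W$), and that each $\I_t$ satisfies $\T$; the paper leaves both implicit.
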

\begin{proof}
    It is clear that every window model stream maps to a model for $\A_{W}$, as this is how window model streams are defined. Now suppose $\I_W$ is a model for $\A_{W}$. Consider the window interpretation stream $\left\{\I_t\right\}_{t\in I}$ where each $\I_t = \I_W$. It is clear that its window interpretation is a model for $\A_{W}$. Moreover, since $\I_W$ satisfies every assertion in $\A_{W}$, it therefore satisfies every assertion in $\A_t$ for all $t\in I$, which means every $\I_t$ must be a model for $\A_t$. Hence this window interpretation stream must be a window model stream. This shows the surjection.\qed
\end{proof}

It might seem like these semantics are unnecessarily convoluted, as one could argue that considering just the models of $\A_{W_I}$ allow perfectly well for graph-entailment reasoning over windows. However, the reason these semantics are formulated in this way can be glimpsed by considering the incremental reasoning problem, i.e., what happens when the window slides?\\
Suppose only the models of $\A_{W_I}$ were considered, and materialization is done by computing the minimal/canonical model of this ABox. When the window changes, it is not possible to compute the new materialization from the old one: none of the temporal structure is preserved, it is not possible to determine which facts need to be dropped from the window.\\
In contrast, if the materialization is in the form of a window model as described above, it is easy to drop the facts that leave the window by dropping the models corresponding to the timestamps that left the window. This will be discussed in more detail in the incremental reasoning section.\\
Preferred repair semantics are not often discussed in literature, possibly because the additional structure of the preference levels does not naturally arise in the standard case of static ABoxes. In the case of a stream of ABoxes however, we do have a natural way of setting the preference levels: the timestamps of the ABoxes. If we consider how we might repair inconsistencies that arise when performing graph entailment reasoning over a window on a stream, it is clear that the partition of the window into its ABoxes can serve as the partition into preference levels. A more recent ABox is preferred over an older one, which means we can set the preference level of an ABox to be equal to its timestamp. A repair of the window will then prefer deleting any number of older facts rather than deleting a single new fact in order to resolve an inconsistency. The philosophy behind this can be pithily worded as:
\begin{center}
    \textit{New is always better.}
\end{center}
Concretely, when an inconsistency arises, the oldest fact(s) leading to that inconsistency will be deleted in favor of keeping the newer facts. This is because the newer facts are seen as more reliable, and they can be interpreted as disproving the older fact(s). 
\begin{exmp}
    Suppose we have a stream representing the operational status of a car's pedals, with concepts $ClutchPressed, GasPedalPressed, BreaksPressed$. And suppose we have the following stream of ABoxes (with the subscripts of the ABoxes representing their timestamp in seconds), that might represent a car $x$ stopping for a traffic light: 
    \[\A_0 = \{GasPedalPressed(x)\}\]
    \[\A_1 = \{GasPedalPressed(x)\}\]
    \[\A_2 = \{\}\]
    \[\A_3 = \{BreaksPressed(x)\}\]
    \[\A_4 = \{BreaksPressed(x),ClutchPressed(x)\}\] 
    If we were to have a window of size 2 seconds that slides every second, the first window would contain $\A_0, \A_1$ and $\A_2$. This window only contains the assertion $GasPedalPressed(x)$, albeit at two different times. When the window slides however $\A_0$ is dropped from the window and $\A_3$ is added. Now the window contains two distinct assertions:
    \[GasPedalPressed(x), BreaksPressed(x)\]
    Of course, anyone who has driven a car knows this is not a situation that should occur, and we might model that with a TBox containing a rule $GasPedalPressed\cap BreaksPressed \subseteq \bot$. With this rule, the window becomes inconsistent under graph entailment, and hence we would not be able to reason over what the cars current status actually is. For this reason, we say that the most recent assertion is probably the right one, and we will drop $GasPedalPressed(x)$ from the window in order to resolve the inconsistency.
\end{exmp}
The idea of interpreting an arising inconsistency as old information being disproven by new information is a central here. In particular, the window could slide again, with a new inconsistency arising that disproves facts previously used to disprove older facts. In this case, we should again delete the oldest facts that caused the new inconsistency. However, the facts that were originally removed should not be added again: disproving a disproof is not a proof, confusing as it sounds.\\
This approach also resonates better with the idea of incremental reasoning: when the window slides, the new ABoxes can be added one by one, repairing after each added ABox, rather than having to consider repairs for adding all ABoxes at once. This will again be discussed in more detail in Section~\ref{inconsistency-semantics}. For now, we adapt the semantics of preferred repairs to our case of incremental window maintenance.\\
Suppose again that $W = \left\{\A_t\right\}_{t\in I}$ is a window over a DL stream, defined by a interval $I$. Similarly to definition~\ref{def:pref_order}, we define an order on the subsets of the window:
\begin{definition}[Window order]
    The window order $\subseteqq_W$ is a partial order on the subsets of $\A_W$ defined by:\\
    \[\forall U, V \subseteq \A_W: U \subseteq_W V \text{ iff:}\]
    \begin{enumerate}
        \item $U\cap\A_t = V\cap\A_t$ for all $t\in I$ or
        \item $\exists t_0 \in I \le n$ such that $U\cap\A_{t_0} \subsetneq V\cap\A_{t_0}$ and $U\cap\A_t = V\cap\A_t$ for all $t>t_0$
    \end{enumerate}
\end{definition}

This definition establishes the preference levels for any window to be determined by the ABoxes contained in that window. Using this order, we can now define window repairs, which are based on the idea of preferred repairs. Because of the subtleties discussed above however, there is a notable difference with the original concept.
\begin{definition}[Window repair]
    Given a window $W =\left\{\A_t\right\}_{t\in I}$, a window repair is a set $W' = \left\{\A'_t\right\}_{t\in I}$ where each $\A'_t\subseteq \A_t$ that satisfies the following conditions:
    \begin{enumerate}
        \item $\A_{W'}$ is consistent
        \item For every $t\in I$, $\bigcup\limits_{s\in I,s\le t}\A'_s$ is the intersection of all maximal (with respect to $\subseteq_W$) consistent subsets of $\A_t\cup\bigcup\limits_{s\in I,s< t}\A'_s$
    \end{enumerate}
\end{definition}

In stead of taking preferred repairs of $\A_W$ with respect to the window order $\subseteq_W$, a window repair is built incrementally by repairing the window after adding each ABox, and then adding the next ABox to the repair. This keeps in line with the philosophy we discussed earlier where facts that get removed from the window during the repairing process, do not get added back just because their "counter evidence" is removed during a repair later on. Moreover, this incremental structure allows repairing algorithms to leverage the repairs made in the previous window whenever the window slides, which is extremely useful.

 \section{Incremental reasoning over windows in OWL2 RL}
 
This section describes an algorithm for reasoning incrementally under the modified DL semantics defined in the previous section. More specifically, we focus on the DL underlying the OWL2 RL profile. In order to build an incremental algorithm that also resolves inconsistencies, we need three components:
\begin{enumerate}
    \item  We need to be able to add the ABoxes that enter the window when it slides.
    \item We need to be able to remove the facts that leave the window when it slides.
    \item  We need to be able to resolve inconsistencies that arise from adding ABoxes when the window slides.
\end{enumerate} 

We start by describing the procedure for adding a single ABox to a materialized window, assuming no inconsistencies are present. This procedure will also allow us to construct a window materialization from scratch by starting with an empty materialization and adding ABoxes one by one.

\subsection{Adding an ABox to a window}

The materialization of a single RL ABox with respect to a RL TBox can be represented by its canonical model: as every model of the ABox must contain its canonical model, it holds that every fact represented by the model must be entailed by the knowledge base, and every fact not in the canonical model is not entailed by it. To represent a graph-entailed materialization for a window, we therefore want to construct the canonical model of the graph-entailment window ABox $\A_W$. To do this, we will construct a "minimal" window model whose associated graph-entailment window model is equal to the canonical model for $\A_W$. Such a minimal window model is not necessarily unique, recall Proposition~\ref{prop:surjection}. However, we will choose our construction in a way that will allow for the required incremental reasoning procedures to be quite natural. We will therefore call the model obtained through this construction the canonical window model.

When a window $W$ consists of a single ABox $W =\left\{\A\right\}$, there is only one such possibility for a minimal window model: the canonical model for $\A$. We now inductively construct a window model for a window that contains more than one ABox:\\
\\
Let $W =\left\{\A_{t_1}, ..., \A_{t_n}\right\}$ be a window consisting of $n>1$ ABoxes, where $t_i<t_j$ if $i<j$. Let $\left\{\I_1, ..., \I_{n-1}\right\}$ be the window model obtained through this construction of the window $W \setminus\A_{t_n}$, and let $\I_n$ be the canonical model of $\A_{t_n}$. We construct a window model of $W$ as follows:\\
\begin{enumerate}
    \item Add $\I_n$ to the window model.
    \item For every inclusion $B\sqsubseteq A$ (or $P\sqsubseteq R$) in $\mathcal{T}$: if there is some minimal set of models $\I_{u_1}, ..., \I_{u_m}$ in $\left\{\I_1, ..., \I_{n-1}, \I_n\right\}$ such that $x\in B^{\bigoplus\limits_i\I_{u_i}}$ for some $x$ (or $P(x,y)$ for some $x,y$), then add $x$ to $A^{\I_{\min\limits_{1\le i\le m}(u_i)}}$ if it was not in that set yet (or add $(x,y)$ to the set representing $R$ for that same interpretation).
    \item Stop this procedure when there is no rule for which this procedure will change the window model (i.e. when the head of every rule has been added to every appropriate model).
\end{enumerate}
     Note that by minimal set of models, we mean that leaving any of the models out results in $x$ no longer being in the image of $B$ under the mapping given by the direct sum of these interpretations.
\begin{definition}[Canonical window model]\label{def:CanWinModel}
    The window model obtained through the above construction is the canonical window model for $W$.
\end{definition}

The idea behind this definition is that when a fact is derived, we add it to the model of the oldest assertion used in the derivation. This ensures that if the oldest assertion used in the derivation leaves the window after a window slides, we do not need to calculate all facts derived from that assertion in order to drop them from the window. Instead, we can simply drop the model corresponding to that oldest assertion, which will take care of all facts derived from that assertion. This means that by smartly choosing how we handle the first requirement (adding ABoxes to a window), we have also solved requirement 2 (removing ABoxes from a window), which is summarised in the following proposition:
\begin{proposition}
    Let $W =\left\{\A_{t_1}, ..., \A_{t_n}\right\}$ be a window model and $\left\{\I_1, ..., \I_{n}\right\}$ be its canonical window model. Then $\left\{\I_2, ..., \I_{n}\right\}$ is the canonical window model for $W' = W\setminus \{\A_{t_1}\}$.
\end{proposition}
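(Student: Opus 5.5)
The idea is to characterise the canonical window model by an invariant that does not depend on the order in which ABoxes were added, and then to check that deleting $\I_1$ preserves that invariant for $W'$. For $k \le j$, write $\A_{[k,j]} = \A_{t_k}\cup\dots\cup\A_{t_j}$. The claim to prove first is the following characterisation. For every window $W=\{\A_{t_1},\dots,\A_{t_n}\}$ with canonical window model $\{\I_1,\dots,\I_n\}$, and for every fact $\alpha$ (concept or role assertion), $\alpha$ holds in $\I_k$ if and only if $\alpha$ is in the canonical model of $\A_{[k,n]}$ but not in the canonical model of $\A_{[k+1,n]}$. Equivalently, $\alpha \in \I_k$ iff $k$ is the largest index such that $\alpha$ is derivable from $\A_{[k,n]}$. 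In words, each derived fact is stored at the interpretation of the oldest assertion used in its "youngest possible" derivation.

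With this characterisation the proposition is immediate. For $W'$, whose ABoxes are indexed $2,\dots,n$, the same characterisation says that its $k$-th interpretation consists of the facts derivable from $\A_{[k,n]}$ and not from $\A_{[k+1,n]}$, for $k\ge 2$. This is exactly what $\I_k$ contains in the canonical window model of $W$. Since interpretations here are determined by their sets of facts under the UNA, $\{\I_2,\dots,\I_n\}$ is the canonical window model of $W'$.

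The characterisation itself will be proved by induction on $n$, following Definition~\ref{def:CanWinModel}.

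\textbf{Base case.} For $n=1$ the canonical window model is the canonical model of $\A_{t_1}$, which matches the characterisation.

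\textbf{Inductive step.} Assume the characterisation for $W\setminus\A_{t_n}$. We then examine what steps 1--3 of the construction do to $\{\I_1,\dots,\I_{n-1},\I_n\}$.

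\emph{Soundness.} By induction on the number of rule applications, every fact placed in $\I_{k}$ is derivable from $\A_{[k,n]}$. A rule fires on a minimal set $\I_{u_1},\dots,\I_{u_m}$ with smallest index $k$. Each premise lies in some $\I_{u_i}$ with $u_i\ge k$, so every premise is derivable from $\A_{[k,n]}$. Applying the rule once more keeps the conclusion derivable from $\A_{[k,n]}$.

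\emph{Completeness.} If $\alpha$ is derivable from $\A_{[k,n]}$, then $\alpha$ appears in some $\I_{k'}$ with $k'\ge k$. This follows by induction along the forward-chaining sequence of the canonical model of $\A_{[k,n]}$ recalled in the preliminaries, since each premise used there already sits in an interpretation with index $\ge k$.

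Together, soundness and completeness show that the union $\I_k\oplus\cdots\oplus\I_n$ equals the canonical model of $\A_{[k,n]}$ for every $k$.

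The main obstacle is the "not in $\A_{[k+1,n]}$" half of the characterisation: a fact must not be stored at an index smaller than necessary. Adding $\A_{t_n}$ may create a newer derivation of a fact that is currently stored in some older $\I_k$. The construction as stated only adds facts and never moves them, so a fact could end up present in two interpretations. I would handle this in one of two ways.

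First, I could weaken the invariant so that it requires only the union property: $\bigoplus_{i\ge k}\I_i$ is the canonical model of $\A_{[k,n]}$ for all $k$. I would then check that the proposition only needs this. Dropping $\I_1$ preserves the union property for every $k\ge 2$. I would then argue that the construction for $W'$ yields the same $\I_k$'s by uniqueness of the fixpoint, which is the step most likely to fail.

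Failing that, I would make explicit the reading of the construction in which the minimal set $\{u_i\}$ is chosen to maximise $\min u_i$. Under that reading, each fact's stored index is $\max\{k : \alpha \text{ derivable from } \A_{[k,n]}\}$, and the exact characterisation goes through. To justify this reading, I would note that it is the only one consistent with the stated intent of Definition~\ref{def:CanWinModel}, namely that dropping the oldest model removes exactly the facts that depend on it.
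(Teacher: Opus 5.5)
Your central characterisation is false for the construction of Definition~\ref{def:CanWinModel}. You sense the obstacle yourself, but it is not a technicality, and neither fallback gets around it.

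Step~1 already violates the characterisation. $\I_k$ is initialised to the canonical model of $\A_{t_k}$, so in the paper's worked example $A(a)\in\I_1$, although $A(a)\in\A_3$ is derivable from $\A_{[2,3]}$. Step~2 also duplicates facts on purpose: it puts $D(a)$ into $\I_1$ and, once $\A_3$ arrives, also into $\I_2$.

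Your second remedy, reading step~2 so that $\min U$ is maximised, changes neither point. Step~1 is untouched, and facts placed at earlier stages are never moved. Actually enforcing the characterisation would delete assertions of $\A_{t_k}$ from $\I_k$. Then $\I_k$ is no longer a model of $\A_{t_k}$, and the result is not even a window model stream, so that remedy proves a claim about a different object.

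Your first remedy fails at the step you flag. The union property ($\bigoplus_{i\ge k}\I_i$ is the canonical model of $\A_{[k,n]}$; your soundness/completeness argument for it is fine) is blind to how duplicated facts are spread over the $\I_k$. There is also no unique fixpoint to appeal to. Whether a set of interpretations is a \emph{minimal} witness for a body changes as facts are added, while heads already written stay. For example, take $\A_{t_1}=\{P(x)\}$, $\A_{t_2}=\{Q(x)\}$, $\A_{t_3}=\{T(x)\}$ and $\T=\{P\sqcap Q\sqsubseteq R,\; Q\sqcap T\sqsubseteq P\}$. The construction places $R(x)$ in $\I_1$ and keeps it after $\I_2$ alone has become a witness. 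Yet deleting it leaves a collection that satisfies the union property and every closure condition of step~2.

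What is missing is the locality argument the paper uses, which needs no description of $\I_k$ in terms of $\A_{[k,n]}$. Let $\{\I'_2,\dots,\I'_n\}$ be the canonical window model of $W'$. In step~2 a head is always written into $\I_{\min U}$, where $U$ is the set of interpretations used. Whether $U$ is a minimal witness depends only on the interpretations indexed by $U$.

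Hence every rule application involving $\I_1$ writes only into $\I_1$. Every application not involving $\I_1$ reads and writes only $\I_2,\dots,\I_n$, and is available in identical form in the construction for $W'$. Run the inductive construction for $W$ (which starts from $\A_{t_1}$) and the one for $W'$ side by side, firing rules in the same order (e.g.\ in rounds). The restriction of the former to indices $2,\dots,n$ then coincides with the latter at every stage, which gives $\I'_k\subseteq\I_k$ and $\I_k\subseteq\I'_k$ for all $k\ge 2$.

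Your union property is a correct complement: it shows that after dropping $\I_1$ the window interpretation is still the canonical model of $\A_{W'}$. But it is strictly weaker than the equality of individual interpretations that the proposition asserts.
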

\begin{proof}
    Let $\{\I'_2, ..., \I'_n\}$ be the canonical window model for $W'$. We prove that $\I'_i = \I_i$. It is clear that for each concept/role $C^{\I_i} \supseteq C^{\I'_i}$: if the second step in the construction of the canonical window model was applicable to any rule during the construction of the canonical model for $W'$, the step must have also been applicable for that same rule in the construction of the model for $W$. On the flipside, it is clear that if that step was applicable for a rule in the construction for W but not for W', then it must have involved $\I_1$. Therefore $\I_i\subseteq \I'_i$ (with this slight abuse of notation we mean that the mapping of any concept or role under $\I_i$ is a subset of the mapping of that same concept or role under $\I'_i$) which concludes the proof.\qed
\end{proof}

Let us now illustrate these concepts with an example:

\begin{exmp}
    Let $S$ be a stream with timeline $\mathbb{N}$ where $\A_i =\emptyset$ for $i>4$ and let the first four ABoxes in the stream be as follows:
    \begin{align*}
        \A_1&= \{A(a),B(a)\}\\
        \A_2&= \{C(a)\}\\
        \A_3&= \{A(a)\}\\
        \A_4&= \{B(a)\}
    \end{align*}
    Consider the TBox $\mathcal{T} = \{A\sqcap C \sqsubseteq D, B\sqcap D \sqsubseteq E\}$. Suppose we want to reason over the window $W_1$ defined by $[1,2]$. To construct the canonical window model of $W_1$, we start by calculating the canonical models $\I_1,\I_2$ for $\A_1,\A_2$ respectively. It is clear that both are the standard interpretation for their respective ABox. To get the canonical window model for $W_1$ then, we start with with $\left\{\I_1\right\}$ and add $\I_2$ to get $\left\{\I_1,\I_2\right\}$ as per Definition~\ref{def:CanWinModel}. Then, we see that the rule $A\sqcap C \sqsubseteq D$ can be applied because $a\in A^{^\I_1}$ and $a \in C^{\I_2}$. Therefore we add $a$ to $D^{\I_1}$ ($\I_1$ contains the oldest facts used in this derivation). We now check again if any rules can be applied and see that $B\sqcap D \sqsubseteq E$ can be applied, wholly within $\I_1$. Therefore we add $a$ to $E^{\I_1}$, after which no more rules can be applied, and the construction terminates.\\
    The model for $W_1$ could then be used to construct the model for the window $W_2$ defined by $[1,3]$: we start with the canonical model for $W_1$ and add to it $\I_3$, the canonical model of $\A_3$, which is again equal to its standard interpretation. We check again which rules can be applied, and see that $A\sqcap C \sqsubseteq D$ can be applied to $a\in C^{\I_2}$ together with $a\in A^{\I_3}$. Note that even though this rule has been applied to this individual, it has not been applied to these two assertions together, which is why we do add $a$ to $D^{\I_2}$. \\
    Now let the window slide, meaning we are now interested in $W_3$ defined by $[2,4]$. To get the canonical model of $W_3$, we first need the canonical model of the window $W_4$ over the interval $[2,3]$. Conveniently, because of the construction of the canonical window model, we can obtain this model by dropping $\I_1$ from the canonical model for $W_2$: if we were to calculate the canonical window model for $W_4$ from scratch, we would start with the canonical model of $\A_2$, then add the canonical model of $\A_3$, and then apply any rules that can be applied, i.e. $A\sqcap C \sqsubseteq D$. Clearly however, this has been covered in the construction of the model for $W_2$, and moreover, by dropping $\I_1$ from that model, we no longer have any facts derived from $\A_1$. It is therefore clear that the result of dropping $\I_1$ from the canonical window model of $W_2$ is exactly the canonical window model of $W_4$.\\
    To get the model for $W_3$ then, we add the canonical model $I_4$ of $\A_4$, which, again, is equal to its standard interpretation, and apply any rules possible. Clearly the second rule, $B\sqcap D \sqsubseteq E$ can be applied to $a\in B^{\I_4}$ and $a \in D^{\I_2}$, and therefore we add $a$ to $E^{\I_2}$. As no more rules can be applied, this results in the canonical window model for $W_3$.
\end{exmp}

We can now describe a semi-naive algorithm for adding a single ABox to a window materialization:
\begin{algorithm}
\caption{Adding one Abox to a window materialization}\label{alg:add_single_abox}
\begin{algorithmic}[1]
\Require A TBox $\T$, a window $W = \left\{\A_{t_1}, ..., \A_{t_{n-1}}\right\}$, its canonical model  $\left\{\I_1, ..., \I_{n-1}\right\}$ and a new ABox $\A_{t_n}$
\Ensure Canonical window model $M = \left\{\I_1, ..., \I_{n-1}, \I_{n}\right\}$ for $W'= W \cup \A_{t_n}$
\State $\Delta_1 \gets \I_n$\Comment{$\I_n$ is the canonical model of $\A_{t_n}$}
\State $M \gets \left\{\I_1, ..., \I_{n-1}\right\}$
\While{$\Delta_1 \neq \emptyset$}
    \ForAll{$B\sqsubseteq A$ in $\T$}
        \If{$\exists a \notin B^{\I_M}$ and $\exists S \subset \{1, ..., n\}$ such that $ a \in B^{\bigoplus\limits_{i\in S}\I_i}$ but $a\notin B^{\bigoplus\limits_{i\in S'}\I_i}$ for any $S'\subsetneq S$}
            \State $j \gets \min S$
            \State $A^{\I'_j} \gets A^{\I'_j}\cup \{a\}$ \Comment{$\I'_j$ is a new interpretation to which we add facts to be added to $\I_j$}
        \EndIf
    \EndFor
    \State $\Delta_2 \gets \{\I'_1, ..., \I'_{n-1}\}$
    \State $ M \gets \{\I_i\oplus \I'_i \vert 1\le i\le n, \I_i \in M, \I'_i \in \Delta_1\}$
    \State $\Delta_1 \gets \Delta_2$
\EndWhile
\end{algorithmic}
\end{algorithm}
Suppose $W = \left\{\A_{t_1}, ..., \A_{t_{n-1}}\right\}$ is a window and we possess its materialization, in the form of its canonical window model $\left\{\I_1, ..., \I_{n-1}\right\}$. Our goal is to compute the canonical window model of $W \cup \A_{t_n}$. According to Definition~\ref{def:CanWinModel}, we need to check for every rule whether there is an individual who satisfies the body according to some combination of interpretations in $\left\{\I_1, ..., \I_{n-1}\right\}\cup \I_n$. We know however, that the rules for which such a combination does not include $\I_n$, have already had their heads added to appropriate models. We therefore make use of a semi-naive approach that goes as follows: we set $\Delta_1 := \I_n$ in line 1 and we set $M = \left\{\I_1, ..., \I_{n-1}\right\}$ in line 2. Then as long as our delta set is nonempty, we look for all rules for which the body is satisfied by a combination of interpretations including at least one from $\Delta_1$ and at least one from $\left\{\I_1, ..., \I_{n-1}\right\}$ inline 4 and 5. In line 10 we set $\Delta_2 =\{\I'_1, ..., \I'_{n-1}$ to be the interpretations that contain exactly the heads of these rules that need to be added to $\I_1, ..., \I_{n-1}$ respectively (recall that the head must be added to the oldest interpretation used in its derivation), which are calculated in line 6 and 7. Having computed $\Delta_2$, we can add $\Delta_1$ to the window model in line 11 to get $\left\{\I_1, ..., \I_{n}\right\}$, and then set $\Delta_1$ equal to $\Delta_2$ and repeat the process by finding all the rules whose bodies are entailed by a set of assertions including at least one fact from this new window model and one fact from the new $\Delta_1$. The process terminates when $\Delta_1 = \emptyset$ as is stated in line 3. It is clear this process finishes as the window consists of a finite amount of ABoxes, each with finite amounts of assertions.\\
\\
We can then immediately apply this algorithm to obtain an algorithm for the incremental maintenance of the materialization (canonical window model) of a window of an RL stream:\\
\begin{algorithm}
\caption{Incremental window slide}\label{alg:incr_maintenance}
\begin{algorithmic}[1]
\Require $W_I$, its canonical window model $\left\{\I_{t}\right\}_{t\in I}$, and $W_{I'}$
\Ensure $M = \left\{\I_{t}\right\}_{t\in I'}$
\State $M \gets \left\{\I_{t}\right\}_{t\in I\cap I'}$
\ForAll{$\A_i$ with $i \in I'\setminus I$}
    \State Calculate $\I_i$
    \State Add $\I_i$ to $M$ by Algorithm~\ref{alg:add_single_abox}
\EndFor
\end{algorithmic}
\end{algorithm}
Given an interval $I =[t_0,t_1]$, its corresponding window $W_I$ and its canonical window model $\left\{\I_{t}\right\}_{t\in I}$, we want to compute the canonical window model for $W_{I'}$ where $I' = [t_0+t_s, t_1+t_s]$ with $t_s < t_0-t_1$. To do this, we start by removing the materialized facts in $[t_0, t_0+t_s]$ in line 1, obtaining the canonical window model for $W_{[t_0+t_s, t_1]}$: $\left\{\I_{t}\right\}_{t\in [t_0+t_s, t_1]}$. Then line 3 constructs the canonical model for each ABox in $W_{]t_1, t_1+t_s]}$, say $\I_1, ...\I_k$, which we then simply add to the window model one by one by using the above algorithm in line 4. The result is the canonical window model for $W_{I'}$.

\section{Incremental windows with non-recursive inconsistencies}\label{inconsistency-semantics}

In this section we propose an adaptation to our incremental window maintenance algorithm for RL with the purpose of handling inconsistencies. However, because of the difficulty of dealing with inconsistencies and calculating repairs, we limit ourselves to what we call non-recursive inconsistencies.\\
Non-recursive inconsistencies are, informally, inconsistencies which can be deduced directly from the ABox, without the need for any reasoning or materialization at run time. Formally we define a non-recursive negative inclusion as follows:

\begin{definition}[non-recursive negative inclusion]
    A negative inclusion $B\subset \bot$ is non-recursive in an RL TBox $\mathcal{T}$ iff there is a finite set of concepts $B_1, ..., B_n$ such that for any ABox $\A$: $\mathcal{T},\A \vDash B \subset \bot$ iff $B_1^\I, ..., B_n^\I=\emptyset$, where $\I$ is the standard interpretation of $\A$, i.e. the interpretation containing only the assertions in $\A$.
\end{definition}
Intuitively, non recursive negative inclusions can be rewritten so that they can be seen directly from the ABox at first glance, without the need for any reasoning to happen first. This is important because it follows from this definition that if all negative inclusions in a TBox are non-recursive, then we can rewrite the TBox by replacing each negative inclusion $B\subset \bot$ with its corresponding set $B_1\subset\bot, ..., B_n \subset \bot$. This rewritten TBox, which we shall refer to as in inconsistency-normal form, then allows us to start the materialization process by looking for inconsistencies, resolving them, and only then start reasoning and adding facts to the materialization. This is important as it avoids materializing facts deduced from assertions that need to be removed to repair inconsistencies.\\ 
When looking to add an ABox $\A$ to a window $W$, the first step will therefore always be to check which inconsistencies will arise. We now introduce some notation in order to work with these inconsistencies:\\
\begin{definition}
    A set of assertions $S\subset \A_W\cup \A$ is a conflict set if it is minimally inconsistent, i.e. it is inconsistent and all strict subsets $S'\subsetneq S$ are consistent.
\end{definition}
Note that if $S$ is a conflict set then there must be some negative inclusion $B\subset \bot$ in the inconsistency normal TBox such that $S\vDash B$. 
\begin{exmp}
    Let $\T =\{\forall R.B \sqsubseteq B, A\cap B \sqsubseteq \bot\}$. The negative inclusion $A\sqcap B \sqsubseteq \bot $ is recursive in $\T$: because of the rule  $\forall R.B \sqsubseteq B$, it is never possible to rewrite the TBox in a way that allows us to check at first glance whether $A \sqcap B$ is satisfied, for all possible Aboxes. Consider the ABox $\A=\{ A(a), B(b), R(a,b)\}$, which is inconsistent w.r.t. $\T$: $a\in \forall R.B$ which means $B(a)$ which together with $A(a)$ implies inconsistency. One could choose to rewrite $\T$ by adding the negative inclusion $A\sqcap \forall R.B \sqsubseteq \bot$, but this does not fix the problem: let $\A' = \{ A(a), B(c), R(a,b), R(b,c)\}$. This ABox is again inconsistent w.r.t. to $\T$, as $B(c)$ together with $R(b,c)$ implies $B(b)$, which then together with $R(a,b)$ implies $B(a)$ which implies inconsistency as before. Moreover the added rule $A\sqcap \forall R.B \sqsubseteq \bot$ does not allow us to deduce inconsistency of $\A'$ at first glance. For this, we would need a rule $A\sqcap \forall R.\forall R.B \sqsubseteq \bot$. Naturally, we can continue this process infinitely, which means an infinite number of rules would need to be added, and we conclude that the negative inclusion $A\sqcap B \sqsubseteq \bot $ is indeed recursive.
\end{exmp}
For a conflict set $S$, we denote by $\min S$ the set of oldest assertions in $S$. This notation is useful as, generally, we will try to repair a window that contains $S$ by deleting element(s) of $\min S$. The following lemma builds on this idea, by helping us determine for which conflict sets it is necessary to delete $\min S$:
\begin{lemma}
    Let $S_1, ..., S_n$ be the conflict sets that arise when adding $\A$ to $W$. Then for any $1\le i \le n$, if there is no $j\ne i$ such that $\min S_j \subset S_i$, we have the following: for any $a\in \min S_i$ there is a window repair of $W\cup\{\A\}$ that does not contain $a$.
\end{lemma}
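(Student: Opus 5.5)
The plan is to exploit the intersection in condition 2 of the window repair definition. The repaired window after adding $\A$ is the intersection of all $\subseteq_W$-maximal consistent subsets of $R := \A \cup \A_{W'}$, where $\A_{W'}$ is the already repaired, hence consistent, content of $W$. So it suffices to show that for each $a \in \min S_i$ there is at least one $\subseteq_W$-maximal consistent subset $M \subseteq R$ with $a \notin M$. The intersection then excludes $a$, and this gives a window repair of $W\cup\{\A\}$ without $a$.

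To build $M$, I would start from the seed $S_i \setminus \{a\}$. It is consistent because $S_i$ is a conflict set, i.e.\ minimally inconsistent. I would then extend it greedily, level by level from the newest timestamp down to the oldest, as the window order prescribes. At level $t$ I would add as many assertions of $R\cap\A_t$ as possible while keeping consistency, treating the seed assertions as already present and never adding $a$. Because all negative inclusions are non-recursive, the TBox may be taken in inconsistency-normal form. Hence consistency of a subset of $R$ is equivalent to containing no conflict set $S_j$, which turns every check into a purely combinatorial hitting-set argument over the $S_j$.

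Two things then have to be verified. \emph{(i) $a$ cannot be added.} This is immediate: $S_i \setminus\{a\} \subseteq M$, so $M\cup\{a\}\supseteq S_i$ is inconsistent. \emph{(ii) $M$ is $\subseteq_W$-maximal.} Suppose some consistent $N$ satisfies $M \subsetneq_W N$. Let $t_0$ be the newest level at which $N$ and $M$ differ, so $M\cap\A_{t_0} \subsetneq N\cap\A_{t_0}$ and they agree above $t_0$. At every level the greedy step was maximal except for the forced presence of seed elements. So $N$ must gain by dropping some seed element $s\in S_i\setminus\{a\}$ from a level at or below $t_0$, which is only worthwhile if $s$ blocks some newer or same-level assertion through a conflict $S_j$.

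This is where the hypothesis enters, and I expect it to be the main obstacle. Since no $\min S_j$ is contained in $S_i$, every conflict $S_j$ meeting the seed has an oldest element outside $S_i$. That element can always be the one left out, and it is at a level no newer than any element of $S_j \cap S_i$. I then need to show that this choice is never worse in the window order than dropping the seed element $s$. The delicate case is when $\min S_j$ and $s$ lie at the same timestamp, where dropping either gives incomparable or equal sets at that level. There I would argue by an exchange step. Replace $N$ by $N' = (N\cup\{s\})\setminus \min S_j$, repeated over the finitely many conflicts involved, and show $N'$ is consistent and still $\supseteq_W$-above $M$. This contradicts the greedy maximality of $M$ at level $t_0$. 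Making this exchange terminate and preserve the comparison with respect to $\subseteq_W$ is the step I would write out most carefully.
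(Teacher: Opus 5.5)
Your reduction is sound: it suffices to exhibit one $\subseteq_W$-maximal consistent subset of $\A\cup\A_{W'}$ that avoids $a$. Your seed-and-greedy construction is also sound, and it is a different route from the paper's. The paper starts from $B_0$, the whole set minus $a$ and minus one element of each $\min S_j\setminus S_i$; the hypothesis is used there to keep $S_i\setminus\{a\}\subseteq B_0$. It then repeatedly lifts to a maximal $R_k$ above $B_k$ and copies its level $t_k$, with $t_k$ strictly decreasing.

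However, your step (ii) carries the whole content of the lemma, and it is not proved: you only announce an exchange, and the exchange as stated is not shown to work. The hypothesis only gives $\min S_j\not\subseteq S_i$, so $\min S_j$ may still contain other seed elements. Then $N'=(N\cup\{s\})\setminus\min S_j$ can delete seed elements you have just restored. Neither termination, nor consistency of the final $N'$, nor $M\subseteq_W N'$ is established.

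The missing observation makes any exchange unnecessary. Put $\sigma=S_i\setminus\{a\}$, let $t_0$ be the newest level where $M$ and $N$ differ, and pick $y\in(N\setminus M)\cap\A_{t_0}$.

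\emph{Case $y=a$.} Then $t_0$ is the level of $a$, and every element of $\sigma$ lies at a level $\ge t_0$. Hence $\sigma\subseteq M\cap\bigcup_{t\ge t_0}\A_t\subseteq N$, so $S_i\subseteq N$, a contradiction. You need this case separately, because $a$ was excluded by fiat rather than by consistency.

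\emph{Case $y\neq a$.} Greedy maximality at level $t_0$ yields a conflict set $S_j\subseteq (M\cap\bigcup_{t\ge t_0}\A_t)\cup\{y\}\cup\sigma$, and $S_j\neq S_i$ because $a\notin S_j$. Since $N$ is consistent and contains $(M\cap\bigcup_{t\ge t_0}\A_t)\cup\{y\}$, $S_j$ must contain some $s\in\sigma\setminus N$, and such an $s$ lies strictly below $t_0$. So $S_j$ reaches below $t_0$, and all of its elements there belong to $\sigma$. Hence $\min S_j\subseteq\sigma\subseteq S_i$, contradicting the hypothesis.

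In particular, the same-timestamp case you flag as delicate never arises. With this argument inserted, your proof is complete, and arguably simpler than the paper's since it avoids the iterated lifting to maximal repairs.
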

\begin{proof}
    Let $S_i$ be such that for all $S_j$, $\min S_j \not\subset S_i$. Then take a element of each $\min S_j \setminus S_i$, and an element $a \in \min S_i$, and remove them from $\A\cup \A_W$. We obtain a consistent subset $B_0$ of $\A\cup \A_W$. There must therefore be a repair $R_0$ such that $B_0\subseteq_P R_0$. Either $B_0 =R_0$ and we have the desired repair, or there is some $t_0$ such that $B_0\cap \A_{t_0} \subsetneq R_0\cap \A_{t_0}$ and $B_0\cap \A_{t} = R_0\cap \A_{t}$ for all $t>t_0$. We then set $B_1 = B_0 \cup (R_0\cap \A_{t_0})$. Note that $B_1$ is consistent: suppose there was a conflict set $S_j \subset B_1$. Then the element of $\min S_j$ we deleted must be contained in $R_0\cap \A_{t_0}$, and therefore $S_j \subset \bigcup\limits_{t\ge t_0}\A_t$. But since $B_1\cap \A_t = R_0\cap \A_t$ for $t\ge t_0$, this would imply $S_j \subset R_0$ which is not possible.\\
    Now again there must be some repair $R_1$ such that $B_1\subseteq_P R_1$. Either $B_1 = R_1$ or there is some $t_1$ such that $B_1\cap \A_{t_1} \subsetneq R_1\cap \A_{t_1}$ and $B_1\cap \A_{t} = R_1\cap \A_{t}$ for all $t>t_1$. Note that $t_1 < t_0$ as otherwise $R_0\subseteq_PR_1$ and $R_1\ne R_0$ which contradicts the maximality of $R_0$. Again, we set $B_2 = B_1 \cup (R_1\cap \A_{t_1})$. Note now that $B_2$ is consistent: suppose there was a conflict set $S_j \subset B_2$. Then the element of $\min S_j$ we deleted must be contained in $R_1\cap \A_{t_1}$ as we have not added any elements older than $t_1$ to $B_0$. Therefore $S_j \subset \bigcup\limits_{t\ge t_1}\A_t$. But since $B_2\cap \A_t = R_1\cap \A_t$ for $t\ge t_1$, this would imply $S_j \subset R_1$ which is not possible.\\
    We can repeat this process, but it is clear this must end at some point (we deleted $n$ elements to get $B_0$ and therefore we can add at most $n$ back, in reality even less as adding all $n$ back would result in inconsistency). Therefore, we obtain some $B_k$ that is a repair, which we constructed only by adding elements back to $B_0$. Moreover, $a\notin B_k$, as $S_i\setminus \{a\} \subset B_0\subset B_k$. Therefore, $B_k$ is the desired repair.\qed
\end{proof}
Using this lemma, we can adapt our procedure for adding an ABox to a window materialization, such that it can resolve non-recursive inconsistencies. 
Suppose again that $W = \left\{\A_{t_1}, ..., \A_{t_{n-1}}\right\}$ is a window and we posses its materialization, in the form of its canonical window model $\left\{\I_1, ..., \I_{n-1}\right\}$. Suppose also that $\A_W$ is consistent and that $\mathcal{T}$ does not contain recursive negative inclusions, and is in inconsistency-normal form. Before adding any facts to the materialization, we first check if the bodies of any of the negative inclusions in $\mathcal{T}$ are entailed by $\A_W\cup \A$. If there are no conflicts, the procedure can proceed as in algorithm~\ref{alg:add_single_abox}.\\
Suppose now there are conflict sets $S_1, ... S_k$. We will construct the intersection of all window repairs of $\A_W \cup \A$ as follows: we start by arranging the conflict sets from newest to oldest, i.e. $\min S_i$ is more recent than (or equally recent as) $\min S_{i+1}$. We will resolve conflicts starting with the most recent and working towards the oldest ones. This is because of the condition in the above lemma: \[\min S_j \subset S_i\] can only hold if $S_j$ is more recent than (or equally recent as) $S_i$. Therefore, the most recent conflict sets (those whose $\min$ are part of the ABox with the largest timestamp), just need to be checked against each other while the oldest conflict sets would need to be checked against all others. We illustrate this idea with a short example:\\
\begin{exmp}
    Consider a window that consists of ABoxes $\A_1 =\{A(a)\}, \A_2= \{B(a)\}$ and an ABox $\A_3 = \{C(a), D(a)\}$ to be added to the window. Let $\T = \{ A\sqcap B\sqcap C \sqsubseteq \bot, B\sqcap D\sqsubseteq \bot\}$. Then adding $\A_3$ to the windows results in 2 conflict sets: $S_1 = \{A(a), B(a), C(a)\}$ and $S_2 =\{B(a), D(a)\}$. In order to resolve these conflicts, we must arrange them in order of recency i.e., we start with the conflict set with the most recent $\min$, which in this case is $S_2$. We see that $\min S_2 = \{B(a)\}$, and that there are no conflict sets equally recent as $S_2$, therefore we know that $B(a)$ must be excluded from all window repairs, and in particular, from the intersection of all window repairs. Now we move on to consider $S_1 = \{A(a), B(a), C(a)\}$, but $\min S_2 \subset S_1$, and therefore $S_1$ is resolved in every repair by resolving $S_2$. This means no action needs to be taken on $S_1$, and we conclude that the desired intersection of all repairs is obtained by removing $B(a)$, which is coincidentally the only repair.
\end{exmp}
Now let $S_1, ..., S_{p_1}$ be the most recent conflict sets. It is clear that if $\vert\min S_i\vert = 1$ holds for $1\le i \le p_i$, then every repair must remove this element. Let $D$ be the set of all such elements: \[D = \bigcup\limits_{1\le i \le p-1}^{\vert\min S_i\vert = 1}\min S_i\] The intersection of $D$ with any repair must be empty. Therefore, for any $1\le i\le n$, if $S_i \cap D \ne \emptyset$, then $S_i$ is resolved already by deleting $D$. This means we can drop such $S_i$ from consideration for our repairing procedure. Then consider all $S_i$ with $1\le i \le p_1$ such that $S_i\cap D = \emptyset$. For these, we have $\min \vert S_i \vert >1$, which makes them tricky to deal with, as different repairs might keep or delete different elements. However, our previous lemma lets us deal with these very easily: if there is no $S_j$ such that $\min S_j \subset S_i$, then the intersection of all repairs must be disjoint with $\min S_i$. If, on the other hand there is such an $S_j$, then we can drop $S_i$ from consideration: Indeed if such an $S_j$ exists, then any repair must remove some element in $\min S_j$, as it must be one of the recent repairs, and this will always repair $S_i$. Consequently, the only $S_i$ with $1\le i \le p_1$ such that $S_i\cap D = \emptyset$ that we need to consider are those for which $\min S_i$ is minimal with respect to the $\subseteq$ preorder. We gather all these in the following set: \[N = \left\{\min S_i \vert  1\le i \le p_1, S_i\cap D = \emptyset, \min S_i \text{ is minimal w.r.t }\subseteq  \right\}\]
Our lemma says that the intersection of all repairs is disjoint with any of the sets in $N$. Moreover, we know that no repair fully contains any of the sets in $N$. Therefore, for any conflict set $S_i$ with $1\le i \le n$, if there is a $M \in N$ such that $M \subset S_i$, we can drop $S_i$ from consideration.\\
\begin{algorithm}
\caption{Resolving inconsistencies when adding an ABox}\label{alg:single_abox_incon_repair}
\begin{algorithmic}[1]
\Require $\T$ in recursion normal form, $W= \left\{\A_{t_1}, ..., \A_{t_{n-1}}\right\}$, $\A$ to be added
\Ensure $D$ s.t. $(\A_W\cup \A) \setminus D$ is a window repair
\State $C \gets [\hspace{2mm}]$
\ForAll{$B\sqsubseteq \bot \in \T$}
    \If{$\A_w\cup\A\vDash B$}
        \State Add to $C$ all $S_i$ for which $S_i\vDash B$, $\forall S'_i\subsetneq S_i, S'_i\not\vDash B$
    \EndIf
\EndFor
\State $D \gets [\hspace{2mm}]$
\State $N \gets [\hspace{2mm}]$
\While{$C \neq \emptyset$}
\State $C_1 \gets$ the most recent $S_i$'s in $C$ \Comment{i.e., those whose $\min S_i$ has the highest time stamp $t$}
\State $D\gets D \cup \bigcup\left\{\min S_i \vert S_i\in C_1, \vert \min S_i\vert = 1\right\}$
\State $C\gets C \setminus \left\{S_i \vert S_i\in C, S_i\cap D \neq \emptyset \right\}$
\State $N\gets N \cup \left\{\min S_i \vert S_i\in C_1, \forall S_j\in C_1\min S_j \not\subset \min S_i \right\}$
\State $C \gets C \setminus \left\{S_i \vert S_i\in C, \exists M\in N, M\subseteq S_i\right\}$
\EndWhile
\State $D\gets D\cup \bigcup\limits_{M\in N}M$
\end{algorithmic}
\end{algorithm}

We now iteratively reapply this idea to resolve all conflicts: of the conflict sets that remain in consideration, take the most recent ones. Of those most recent ones, take those with a single oldest assertion. Add these assertions to $D$, and remove from consideration any conflict sets that have a non empty intersection with $D$. Take the most recent conflict sets with multiple oldest assertions that remain in consideration. Of those, take the ones whose $\min$ is minimal w.r.t $\subseteq$, and remove the others from consideration. Add all the $\min$ to $N$, and finally remove all conflict sets that fully contain a set in $N$ from consideration. \\
It is clear that applying this will result eventually in removing all conflicts from consideration, leaving us with a set $D$ and a set $N$. To obtain the intersection of all repairs, all that remains to do is to remove $D \cup \bigcup\limits_{M\in N} M$ from $\A_W\cup\A$. Lastly, we need to remove facts that are deduced from this set, from the materialization. This can in practice be done by using any of the algorithms discussed in \cite{motik2019maintenance}. 

\begin{lemma}
    Given an interval $I= [t_0, t_1]$, and a $t_i<t_1\in I$, let $W_I$ be the window over a stream $S$ defined by $I$, and let $W'_I$ be a window repair of $W_I$. Let $I' = [t_i,t_1]$, and let $W_{I'}$ be the window over $S$ defined by $I'$. The set $W'_{I'} = \left\{\A'_t\right\}_{t\in I'}\subset W'_I$ is a window repair of $W_{I'}$.
\end{lemma}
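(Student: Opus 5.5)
We argue by induction along the timestamps of $I'\cap\mathbb{T}$ in increasing order, checking the two conditions of a window repair for $W'_{I'}=\{\A'_t\}_{t\in I'}$. Condition 1 is immediate. $\A_{W'_{I'}}\subseteq \A_{W'_I}$, the latter is consistent by assumption, and in RL (indeed in any monotone DL) a subset of a consistent ABox is consistent.

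The work is in condition 2. For $t\in I'$, write
\[P_t=\bigcup_{s\in I, s<t}\A'_s,\qquad Q_t=\bigcup_{s\in I', s<t}\A'_s,\qquad O=\bigcup_{s\in I, s<t_i}\A'_s .\]
Then $P_t=Q_t\cup O$, and every fact of $O$ is strictly older than every fact of $\A_t\cup Q_t$. Since $W'_I$ is a window repair, $\bigcup_{s\in I,s\le t}\A'_s$ is the intersection of all $\subseteq_W$-maximal consistent subsets of $F_t:=\A_t\cup P_t$. Restricting to timestamps $\ge t_i$, the left-hand side becomes $\bigcup_{s\in I',s\le t}\A'_s$. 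It therefore suffices to prove the following claim about $X_t:=\A_t\cup Q_t$:
\[\bigcap\{Y : Y \text{ maximal consistent in } X_t\}=\Big(\bigcap\{M : M\text{ maximal consistent in } F_t\}\Big)\cap X_t .\]
I would establish this by showing that $M\mapsto M\cap X_t$ maps the maximal consistent subsets of $F_t$ onto the maximal consistent subsets of $X_t$. The "onto" part makes the two intersections range over the same family after restriction to $X_t$.

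\emph{Well-defined.} Let $M$ be maximal in $F_t$, and suppose $M\cap X_t$ were not maximal in $X_t$. Then there is a consistent $Y\subseteq X_t$ that beats $M\cap X_t$ at some level $t^*\ge t_i$ and agrees with it at every newer level. Now view $Y$ as a subset of $F_t$. The window order compares the newest differing level first, and that level $t^*$ is newer than every level occurring in $O$. Hence $Y$ alone, with no old facts at all, satisfies $M\subseteq_W Y$ and $Y\neq M$, contradicting the maximality of $M$. Informally, older facts can never force the removal of newer ones.

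\emph{Onto.} Given a maximal $Y$ in $X_t$, start from $Y$ and add facts of $O$ level by level, from newest to oldest. At each level add a maximal set of facts that keeps consistency, and then pass to a $\subseteq_W$-maximal consistent $M\supseteq_W Y$ in the finite set $F_t$. We need $M\cap X_t=Y$. If $M\cap X_t$ differed from $Y$, the newest differing level would be $\ge t_i$. There $M\cap X_t$ cannot be strictly better than $Y$, since it is consistent and $Y$ is maximal in $X_t$. It cannot be strictly worse either, since $Y\subseteq_W M$ and the newer levels agree. So the two agree on $X_t$.

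With the claim in hand, the inductive step only needs the hypothesis that $Q_t$ is exactly the union of the $\A'_s$ for $s\in I'$, $s<t$. This holds by definition of $W'_{I'}$, so the induction is in fact trivial once the claim is proved; the base case $t=t_i$ is the instance $Q_t=\emptyset$.

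I expect the main obstacle to be the onto direction, in particular verifying that the greedy extension by old facts really yields a $\subseteq_W$-maximal set without disturbing the part at levels $\ge t_i$. This needs care with the clause of the window order in which all levels agree, and with the possibility that several old levels interact through conflict sets. If the greedy argument becomes messy, I would instead argue by contradiction as in the proof of the previous lemma. I would take any maximal $M\supseteq_W Y$, note that all of its improvements over $Y$ must occur at levels $<t_i$, and conclude directly that $M\cap X_t=Y$.
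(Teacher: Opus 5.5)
Your argument is correct, and it follows a different and more complete route than the paper's.

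\emph{The paper's route.} It argues by contradiction at the first $t\in I'$ where the restricted prefix $\bigcup_{t_i\le s\le t}\A'_s$ fails to be \emph{a} maximal consistent subset of $\A_t\cup Q_t$. It then picks a single assertion $a\in\A_t\setminus\A'_t$ that can be added. Finally it observes that the resulting set, which discards every fact older than $t_i$, strictly dominates the full prefix $\bigcup_{t_0\le s\le t}\A'_s$ in $\subseteq_W$. This is essentially your well-definedness direction. You use an arbitrary dominating competitor $Y$ instead of one added assertion, which avoids the paper's unargued claim that $a$ can be added without dropping any fact of $Q_t$.

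\emph{What the paper misses.} Condition 2 of a window repair asks for the \emph{intersection} of all maximal consistent subsets, and that intersection need not itself be maximal. To transfer an intersection you also need every maximal consistent subset of $X_t$ to arise as a restriction. That is your onto direction, and the paper never addresses it. So the paper's proof is shorter but only yields a maximality property, while yours establishes condition 2 as actually stated.

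\emph{Small tidy-ups.}
\begin{itemize}
\item The greedy level-by-level extension is superfluous. A $\subseteq_W$-maximal consistent $M$ with $Y\subseteq_W M$ exists by finiteness, once you note in one line that $\subseteq_W$ is transitive.
\item The onto step is cleanest as a case split on the witness level of $Y\subseteq_W M$. If that level is $\ge t_i$, then $M\cap X_t$ strictly dominates $Y$ in $X_t$, contradicting maximality of $Y$. Otherwise $M$ and $Y$ agree on all levels $\ge t_i$. This also handles the incomparable case that your ``better/worse'' phrasing leaves implicit.
\item You are right that condition 2 is phrased with the final sets $\A'_s$, so each $t$ can be checked independently. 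Neither an induction nor the paper's smallest failing $t$ is needed.
\item Both proofs tacitly assume the $\A_t$ are pairwise disjoint (time-annotated assertions). This is what gives $M\cap\A_s=M\cap X_t\cap\A_s$ for $s\ge t_i$.
\end{itemize}
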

\begin{proof}
    Suppose $W'_{I'} $ is not a window repair of $W_{I'}$. Then, because a subset of a consistent ABox is again consistent, there must be a smallest $t\in I'$ such that $\bigcup\limits_{t_i\le s\le t}\A'_s$ is not a maximal consistent subset of $\A_t\cup\bigcup\limits_{t_i\le s< t}\A'_s$. There must therefore be an assertion $a\in\A_t\setminus\A'_t$ such that $\{a\}\cup\bigcup\limits_{t_i\le s\le t}\A'_s$ is consistent. However, this would imply that $W'_I$ is not a window repair of $W_I$, as $\bigcup\limits_{t_0\le s\le t}\A'_s\subseteq_W \{a\}\cup\bigcup\limits_{t_i\le s\le t}\A'_s$. This contradiction concludes the proof.\qed
\end{proof}
This lemma shows that when a window slides and we need to drop the oldest ABoxes, we can drop the last ABoxes from our repair and still have a window repair as starting point for the repair of the next window.\\
\\
\section{Discussion}
The incremental foundations of our new semantics ensure that the algorithms are capable of handling the challenge that is the high-velocity nature of streams. Moreover, these ideas were developed with edge-computing in mind, where materialization of momentary ABoxes would be done at the edge before being sent to the cloud. If not done on the edge, the momentary ABoxes could be materialized in parallel on the central system before adding them to the window. Either way, these semantics are fundamentally suited for high performance reasoning on streams.\\
The issue of inconsistencies is resolved by relying on the core principle "new is always better" which provides a clear-cut and sensible way of dealing with inconsistent data. This principle allows us to perform repairs in a more deterministic manner, rather than just guessing at which assertions need to be removed. Performing repairs incrementally also ensures that the repairing process is low-latency and in line with the idea of "disproving a disproof is not a proof".\\
The semantics and algorithms we presented solve the challenges of working incrementally and handling inconsistencies, with some limitations. The biggest limitation is that only non-recursive inconsistencies are handled. However, in practice, this can be remedied by including rewritten version of recursive negative inclusions up to some predetermined depth. This would allow the algorithm for inconsistency repair to handle most recursive inconsistencies, depending on the specified maximum depth, at the cost of more rules needing checking which would likely impact performance. Another limitation is that these semantics consider only streams, however, in practice, streams are often reasoned over in combination with some static "background" knowledge base. The proposed semantics could be modified to this scenario in a fairly minimal way, by having a background ABox that is the starting ABox of every window and is of the highest priority in the repair order, i.e. it has timestamp $+\infty$. This would ensure that no background knowledge is ever removed, as background knowledge should be verified to be true at all times during the stream.
\section{Conclusion}

In conclusion, we have proposed semantics for reasoning incrementally over DL streams, while handling (non-recursive) inconsistencies. We have proposed semi-naive algorithms for reasoning under these semantics for the RL description logic underlying OWL2 RL. While the proposed semantics lacks the immediate capabilities of handling background knowledge and recursive inconsistencies, they can be adapted to both in theory, which could be of interest in practical use-cases. Furthermore, experimental evaluation of these algorithms remains to be performed and as such is a core goal of our future work. Lastly,  the adaptation of these semantics to other description logics, in particular the DLs underlying the other two OWL2 profiles is a major challenge. Both of these DLs lack a general finite "minimal' model like the canonical model of RL knowledge bases, however they do have similar properties that we believe should allow for adaptation of our semantics.

%
%
%
\bibliographystyle{splncs04}
\bibliography{biblio}
\end{document}